\documentclass[conference]{IEEEtran}
\IEEEoverridecommandlockouts

\usepackage{amsmath,amssymb,amsthm}
\usepackage{graphicx}
\usepackage{booktabs}
\usepackage{colortbl}
\usepackage{xcolor}
\usepackage{enumitem}
\usepackage[hidelinks]{hyperref}
\usepackage{cite}

\definecolor{cBlue}{RGB}{70,130,180}
\definecolor{cGreen}{RGB}{50,140,60}
\definecolor{cRed}{RGB}{190,50,50}

\newtheorem{theorem}{Theorem}
\newtheorem{proposition}{Proposition}
\newtheorem{definition}{Definition}
\newtheorem{lemma}{Lemma}

\hypersetup{
  pdftitle={When More Evidence Hurts: Publication-Bias Drift and Principled Stopping for Biomedical Causal Search},
  pdfauthor={Fred Sun, Shangqi Guo}
}

\begin{document}

\title{When More Evidence Hurts: Publication-Bias Drift and Principled Stopping for Biomedical Causal Search}

\author{\IEEEauthorblockN{Fred Sun, Shangqi Guo\textsuperscript{*}}
\IEEEauthorblockA{Center for Brain-Inspired Computing Research\\
Department of Precision Instrument\\
Tsinghua University, Beijing, China}
\thanks{\textsuperscript{*}Corresponding author: Shangqi Guo.}}

\maketitle

\begin{abstract}
Automated biomedical evidence synthesis depends on retrieving published studies, but the biomedical literature is systematically skewed toward positive findings. Deeper retrieval can therefore make a system \emph{more} likely to falsely infer benefit when the true effect is null. We formalise this phenomenon as \emph{evidence drift} and prove that, under a standard publication-bias model, the false-positive probability on null-effect queries follows a strictly increasing large-sample envelope in retrieval depth, approaching one. Empirically, on a held-out test set of 140 Cochrane-derived queries, drift rises monotonically from 7.9\% to 15.7\% as the retrieval budget grows from 3 to 20 steps, and concentrates in the null-effect class. We present DACG-agent, a drift-aware causal-graph agent that incrementally builds a causal knowledge graph from PubMed abstracts and applies a two-layer stopping policy with complementary roles: a KL-divergence monitor that detects posterior convergence (the accuracy layer), and a Bradley--Terry process reward model (PRM) whose online decline detection halts retrieval once evidence quality peaks (the efficiency layer). Against full-budget retrieval, DACG-agent reduces evidence drift from 15.7\% to 6.4\% and improves null-effect accuracy by 21 percentage points (40.0\%$\to$61.4\%) while using 67\% fewer retrieval steps; overall accuracy rises from 61.4\% to 69.3\% (95\% CI 61--77). A simulation confirms the drift result transfers from the analysed vote-counting aggregator to the deployed noisy-OR one.
\end{abstract}

\begin{IEEEkeywords}
publication bias, evidence drift, knowledge graph, stopping policy, biomedical causal inference
\end{IEEEkeywords}

\section{Introduction}
\label{sec:intro}

Automated biomedical evidence synthesis promises to accelerate clinical decision-making, yet its reliability depends on the quality of the retrieved literature. Biomedical journals publish positive results at substantially higher rates than null or negative findings~\cite{begg1988publication,song2010dissemination}, and this publication bias propagates into every system that draws on the published record, from the parametric priors absorbed into LLM weights to the abstracts returned by PubMed searches~\cite{hasenboehler2007bias}. Each retrieval step preferentially surfaces papers reporting positive outcomes, progressively tilting accumulated evidence towards \textit{Beneficial} regardless of the true effect. We call this failure mode \emph{evidence drift} and prove that under a standard bias model the probability of misclassifying a true NoEffect query follows a strictly increasing large-sample envelope in retrieval depth, approaching one (Theorem~\ref{thm:drift}).

Existing approaches do not address this problem. LLM zero-shot methods inherit the same positive-result skew from pre-training data (91.4\% Beneficial accuracy but only 42.9\% NoEffect on the held-out test queries); standard RAG pipelines compound the skew by injecting biased documents, reducing NoEffect accuracy to 31.4\%; and recent knowledge-graph agents~\cite{jiang2024kg,su2024kgarevion,zhang2025medkgent} focus on multi-step reasoning without any mechanism to detect bias accumulation during iterative retrieval. The question is not \emph{whether} to retrieve, since clinical evidence synthesis demands traceable provenance~\cite{song2010dissemination}, but \emph{when to stop} before bias overwhelms the signal.

We present DACG-agent (\textbf{D}rift-\textbf{A}ware \textbf{C}ausal-\textbf{G}raph Agent), which incrementally builds a causal knowledge graph from PubMed abstracts and applies a two-layer stopping policy: a KL-divergence monitor that detects posterior convergence (Proposition~\ref{prop:kl-energy}), and a Bradley--Terry process reward model (PRM) whose online decline detection halts retrieval as soon as evidence quality peaks (Theorem~\ref{thm:prm-bayes}). Our contributions are:
\begin{enumerate}[leftmargin=*,nosep]
\item A formal proof that publication bias drives the misclassification probability of NoEffect queries to one as retrieval depth grows (Theorem~\ref{thm:drift}), with a simulation confirming the result transfers from the analysed vote-counting aggregator to the deployed noisy-OR one.
\item A two-layer stopping framework whose layers address distinct objectives: KL convergence monitoring (Proposition~\ref{prop:kl-energy}) governs \emph{accuracy} by halting once the posterior stabilises, while online PRM decline detection (Theorem~\ref{thm:prm-bayes}) governs \emph{efficiency}, reaching the same accuracy in 14\% fewer steps; a component ablation isolates each.
\item Validation on 140 Cochrane-derived test queries (macro-F1, bootstrap CIs): adaptive stopping raises null-effect accuracy by 21\,pp over full-budget retrieval (40.0\%$\to$61.4\%), cuts drift from 15.7\% to 6.4\%, and uses 67\% fewer steps.
\item A biomedical entity-grounding analysis over all 302 benchmark entities revealing the grounding--drift interaction: the best-characterised clinical entities, which accumulate the most literature, are precisely those most exposed to drift (Section~\ref{sec:grounding}).
\end{enumerate}

\section{Background and Problem Formulation}
\label{sec:background}

\subsection{Causal Search on Knowledge Graphs}

A biomedical causal query asks whether one entity has a causal effect on another: given a query $Q = (E_1, R^*, E_2, C)$, does entity~$E_1$ affect entity~$E_2$ through relation type~$R^*$, subject to optional constraints~$C$? The answer is a label $L \in \{\textit{Beneficial},\, \textit{NoEffect},\, \textit{Harmful}\}$. These queries are \emph{causal} rather than merely associational because the reference evidence derives from randomised controlled trials (RCTs), the design that licenses interventional conclusions, as synthesised in Cochrane reviews. The system maintains a dynamic causal graph $G_t = (V_t, E_t)$ that evolves with each retrieval step~$t$. Nodes of $G_t$ are resolved biomedical entities (interventions, outcomes, and intermediate concepts); edges are directed causal relations. Every edge stores a relation type, a polarity $p \in \{+1, -1, 0\}$, a list of supporting evidence (PubMed identifiers, PMIDs, with per-extraction confidence), and an aggregated confidence score~$w_{e,t} \in [0,1)$ produced by the noisy-OR update of Eq.~\ref{eq:noisy-or} (Lemma~\ref{lem:range}). A causal path $\pi = (e_1, \ldots, e_k)$ through~$G_t$ carries a route strength
\begin{equation}
\label{eq:route-strength}
\mathrm{Str}(\pi,t) = \bigg(\prod_{i=1}^{k} w_{e_i,t}\bigg) \cdot \exp(-\rho\, k),
\end{equation}
where $w_{e_i,t}\in[0,1)$ are edge beliefs and $\exp(-\rho k)$, with fixed per-hop discount $\rho=1$, tempers longer chains: the standard degree-of-belief-along-a-path score of probabilistic KG reasoning~\cite{lao2010relational}. Because each $w_{e,t}<1$ the product already decays with length, so $\exp(-\rho k)$ acts only as a per-hop compositional-uncertainty discount (Lemma~\ref{lem:range} and Section~\ref{sec:approach} formalise the resulting energy validity). For \emph{does zinc reduce common-cold duration?}, a one-hop path (zinc$\xrightarrow{-}$duration, $w=0.8$) has $\mathrm{Str}=0.8\,e^{-1}=0.29$ and a two-hop path (via immune response, $w=0.7,0.6$) has $0.7\cdot0.6\cdot e^{-2}=0.057$; both cast a strength-weighted \textit{Beneficial} vote. The label posterior $q_t(L\mid Q)$ marginalises over all discovered paths, so retrieval becomes incremental posterior update over a dynamic graph, and monitoring how that posterior evolves makes the stopping problem well-defined.

\subsection{Evidence Drift Under Publication Bias}

The empirical drift pattern described in Section~\ref{sec:intro} admits a precise characterisation. We first formalise the notion of evidence drift.

\begin{definition}[Evidence Drift]
\label{def:drift}
Let $q_t(L \mid Q)$ be the label posterior after $t$ retrieval steps and $\hat{L}_t = \arg\max_L q_t(L \mid Q)$ the predicted label. For a query with true label~$L^\star$, the \emph{drift risk} at depth~$t$ is $D(t) = P(\hat{L}_t \neq L^\star \mid L^\star)$. The system exhibits \emph{evidence drift} for label~$L^\star$ if $D(t) \to 1$ as $t \to \infty$. In finite trajectories, a \emph{drift event} occurs when $\exists\, s < t$ such that $\hat{L}_s = L^\star$ but $\hat{L}_t \neq L^\star$: the system once held the correct answer and subsequently lost it.
\end{definition}

\noindent Suppose each retrieved paper independently reports a positive result with probability $\tfrac{1}{2}+b$ for some $b > 0$, regardless of the true causal effect. This models publication bias: positive findings are over-represented by a margin~$b$. Let $\hat{p}_t^+$ denote the fraction of positive papers after~$t$ retrieval steps.

\begin{theorem}[Drift Under Publication Bias]
\label{thm:drift}
Under the bias model above, a vote-counting aggregator predicts \textit{Beneficial} when $\hat{p}_t^+ > \tfrac{1}{2}$ and \textit{NoEffect} otherwise. For queries whose true label is \textit{NoEffect}, the drift risk (Definition~\ref{def:drift}) satisfies
\[
D(t) = P\!\big(\hat{p}_t^+ > \tfrac{1}{2}\big) \;\xrightarrow{\;\text{CLT}\;}\; \Phi\!\Big(\frac{b\sqrt{t}}{\sigma}\Big),
\]
where $\sigma^2 = (\tfrac{1}{2}+b)(\tfrac{1}{2}-b)$ and $\Phi$ is the standard normal cumulative distribution function (CDF), and the limit follows from the central limit theorem (CLT). The CLT approximation $\Phi(b\sqrt{t}/\sigma)$ is strictly increasing in~$t$, and $D(t) \to 1$ as $t \to \infty$.
\end{theorem}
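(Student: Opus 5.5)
The plan is to model the vote count as a binomial sum and then apply the CLT and a law-of-large-numbers argument. Let $X_1,\dots,X_t$ be i.i.d.\ Bernoulli indicators that the $i$-th retrieved paper reports a positive result. Under the bias model each has mean $\mu=\tfrac12+b$ and variance $\sigma^2=(\tfrac12+b)(\tfrac12-b)$, and $\hat p_t^+=\frac1t\sum_i X_i$. For a NoEffect query the aggregator errs exactly when it outputs \textit{Beneficial}, because it only ever outputs these two labels. Definition~\ref{def:drift} therefore gives $D(t)=P(\hat p_t^+>\tfrac12)$ directly, and no further work is needed for that identity.

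Next I standardise. Write
\[
P(\hat p_t^+>\tfrac12)=P\!\Big(\tfrac{\sqrt t(\hat p_t^+-\mu)}{\sigma}>-\tfrac{b\sqrt t}{\sigma}\Big).
\]
By the CLT, $Z_t=\sqrt t(\hat p_t^+-\mu)/\sigma$ converges in distribution to $N(0,1)$. Replacing $Z_t$ by a standard normal gives $1-\Phi(-b\sqrt t/\sigma)=\Phi(b\sqrt t/\sigma)$, which is the stated large-sample envelope. To make the approximation quantitative I would invoke Berry--Esseen: the Bernoulli third absolute moment is finite, so
\[
\sup_x \Big|P(Z_t\le x)-\Phi(x)\Big| \le C/(\sigma^{3}\sqrt t)\cdot E|X-\mu|^3 .
\]
This bounds $|D(t)-\Phi(b\sqrt t/\sigma)|=O(t^{-1/2})$ uniformly, which is what the arrow labelled ``CLT'' should mean. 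Here $0<b<\tfrac12$ is needed so that $\sigma>0$, and I will state that assumption explicitly.

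Monotonicity of the envelope is immediate. The map $t\mapsto b\sqrt t/\sigma$ is strictly increasing because $b,\sigma>0$, and $\Phi$ is strictly increasing. Note that the claim concerns the envelope, not $D(t)$ itself. The exact binomial $D(t)$ can be non-monotone because of parity effects: when $t$ is even, ties at $\hat p^+_t=\tfrac12$ count as NoEffect. So I would not try to prove monotonicity of $D(t)$.

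For $D(t)\to1$ I would avoid the CLT limit entirely and use the weak law of large numbers, or Hoeffding. We have $\hat p_t^+>\tfrac12$ whenever $|\hat p_t^+-\mu|<b$, so
\[
1-D(t)\le P(|\hat p_t^+-\mu|\ge b)\le 2e^{-2b^2t}\to0 .
\]
This gives the limit directly with an explicit exponential rate, and it is consistent with $\Phi(b\sqrt t/\sigma)\to1$. The main subtlety is interpretive rather than technical. The CLT statement is an approximation, since the standardised threshold $-b\sqrt t/\sigma$ moves with $t$. It must therefore be justified by uniform convergence (P\'olya's theorem or Berry--Esseen) rather than by pointwise convergence at a fixed point, and I expect that step to need the most care in the write-up.
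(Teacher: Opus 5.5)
Your proposal is correct and follows the same route as the paper's sketch. Both use binomial vote counts, CLT standardisation giving the envelope $\Phi(b\sqrt{t}/\sigma)$, strict monotonicity claimed only for the envelope (with the parity caveat for the exact $D(t)$), and concentration from the shrinking $O(1/t)$ variance for $D(t)\to 1$, which your Hoeffding bound simply sharpens. One small recalibration: for fixed $b$ the uniform-approximation step is not the delicate part you expect. Since $1-D(t)\le e^{-2b^2t}$ and $1-\Phi(b\sqrt{t}/\sigma)\le \tfrac12 e^{-b^2t/(2\sigma^2)}\le \tfrac12 e^{-2b^2t}$, the difference $|D(t)-\Phi(b\sqrt{t}/\sigma)|$ is already exponentially small, so Berry--Esseen is only needed to make the envelope informative at moderate $t$.
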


\noindent\emph{Proof sketch.} The bias shifts the expected positive fraction above $\tfrac{1}{2}$ while variance shrinks as $O(1/t)$, so the aggregator becomes increasingly confident in the wrong label. The exact finite-sample $D(t)$ may exhibit minor non-monotonicity due to the discrete nature of binomial counts (the threshold $\lfloor t/2 \rfloor$ shifts with parity), but the CLT envelope is strictly increasing and becomes tight for moderate~$t$. Full proof in the extended version.

Under publication bias, additional data does not correct the estimate but rather increases confidence in the wrong answer; the convergence rate depends on the bias magnitude~$b$ and the aggregation mechanism.

\subsection{From Vote-Counting to the Deployed Aggregator}
\label{sec:noisyor}
Theorem~\ref{thm:drift} analyses a vote-counting aggregator, whereas DACG deploys a noisy-OR confidence update (Eq.~\ref{eq:noisy-or}). A natural concern is whether the drift result transfers. We simulate a true-null query as a stream of reports, each positive with probability $\tfrac12+b$, and compare the two aggregators over depth $T=20$ (8{,}000 trials each). The two produce statistically indistinguishable, rising false-positive curves (Fig.~\ref{fig:noisyor}): for $b=0.1$ both rise from 61\% to 81\% by $T=20$; for $b=0.2$, from 71\% to 97\%. The drift result therefore transfers to the deployed aggregator; we do \emph{not} claim noisy-OR is worse, only that it is not immune.

We further test three departures from the theorem's idealised assumptions (addressing the concern that i.i.d.\ retrieval and a fixed bias $b$ are unrealistic). (a) \emph{Correlated evidence} ($\rho=0.5$ between successive reports) slows drift (59\%$\to$70\%) but preserves monotonicity. (b) \emph{Query-varying bias} ($b\sim\mathcal{N}(0.1,0.05)$) reproduces the i.i.d.\ curve (60\%$\to$79\%). (c) \emph{Asymmetric extraction}, where null findings are extracted at lower confidence than positive ones ($s^-=0.3$ vs.\ $s^+=0.6$), the empirically realistic case since null results are both under-published and under-reported within abstracts, drives drift \emph{higher}, to nearly 100\% by $T=20$, though not strictly monotonically (a structural dip near $t=4$ from the tie threshold). The theorem is thus a conservative lower bound on the drift a deployed system faces.

\begin{figure}[t]
\centering
\includegraphics[width=\columnwidth]{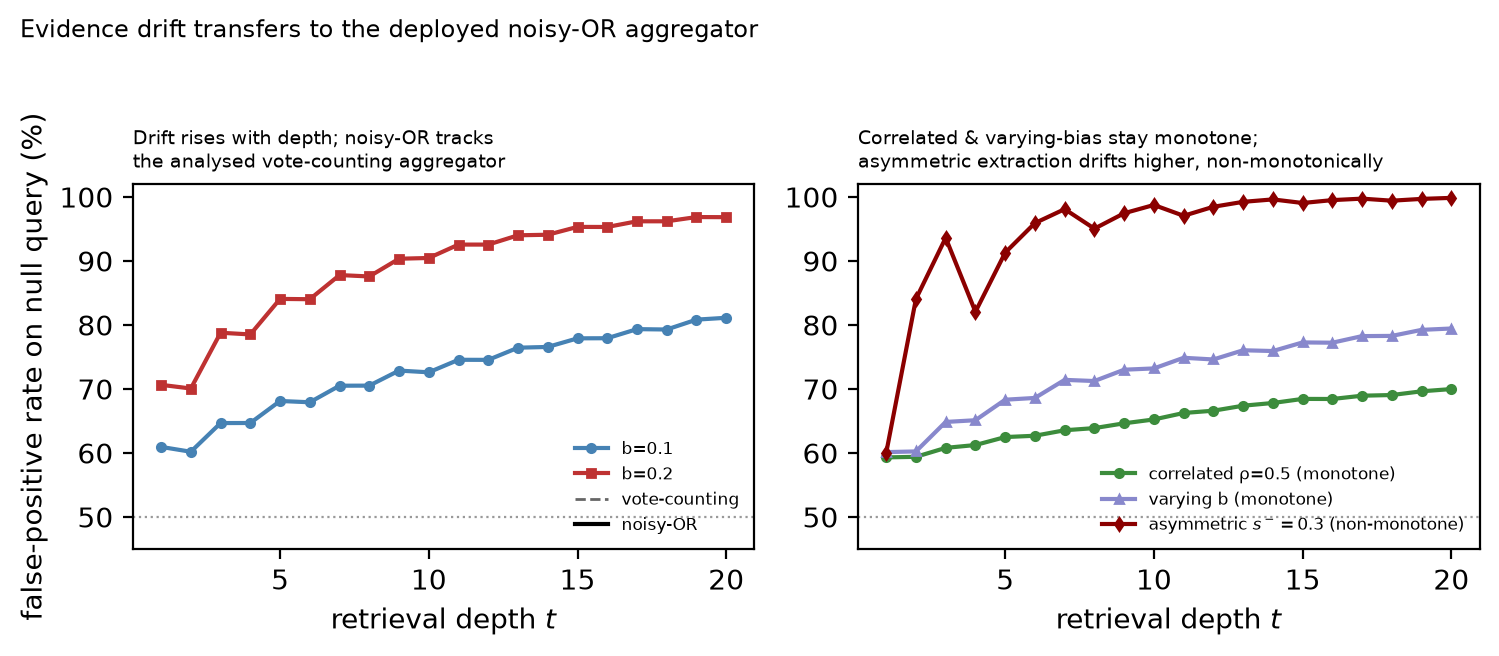}
\caption{Drift under different aggregators and assumptions. Left: vote-counting and noisy-OR yield indistinguishable rising drift, confirming the result transfers to the deployed aggregator. Right: correlated evidence and query-varying bias remain monotone; asymmetric extraction (realistic for under-reported nulls) drifts higher but non-monotonically, so the theorem is a conservative bound.}
\label{fig:noisyor}
\end{figure}

\section{Approach: Knowledge-Graph Search with Principled Stopping}
\label{sec:approach}

DACG-agent operates in an iterative loop: retrieve a batch of PubMed abstracts, extract causal triples, update the graph, infer paths, compute the posterior, and decide whether to stop or continue (Fig.~\ref{alg:kgsa}).

\subsection{Graph Construction}

An entity resolver normalises mentions through case folding, abbreviation expansion, and alias matching, mapping co-references to stable node identifiers (edge schema as in Section~\ref{sec:background}).

When new evidence with confidence~$s_{\mathrm{in}}$ supports an existing edge, the aggregated confidence updates through the noisy-OR rule:
\begin{equation}
\label{eq:noisy-or}
s_{\mathrm{new}} = 1 - (1 - s_{\mathrm{old}})(1 - s_{\mathrm{in}}).
\end{equation}
Each piece of evidence is treated as an independent chance of detecting the true relation.

\begin{lemma}[Range of edge belief]
\label{lem:range}
If every per-extraction confidence $s_j\in[0,1)$, then the aggregated edge belief $w_{e,t}=1-\prod_j(1-s_j)\in[0,1)$.
\end{lemma}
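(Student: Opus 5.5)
The plan is to prove Lemma~\ref{lem:range} by induction on the number $n$ of extractions supporting edge $e$, using the noisy-OR update of Eq.~\ref{eq:noisy-or}. First I will show that repeated application of Eq.~\ref{eq:noisy-or} yields the closed form. Initialise the belief with the first extraction, $w^{(1)}=s_1$; equivalently start from $s_{\mathrm{old}}=0$, the empty product. Then $1-w^{(n)}=(1-w^{(n-1)})(1-s_n)$, and unrolling this recursion gives $w^{(n)}=1-\prod_{j=1}^n(1-s_j)$. This matches the expression in the statement. Since multiplication is commutative, the result does not depend on the order in which evidence arrives, so $w_{e,t}$ is well defined at every retrieval step $t$.

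Next I will bound the product $P_n=\prod_{j=1}^n(1-s_j)$. Each hypothesis $s_j\in[0,1)$ gives $1-s_j\in(0,1]$. The set $(0,1]$ is closed under multiplication: a product of positive numbers is positive, and a product of numbers at most $1$ is at most $1$. Hence $P_n\in(0,1]$, and therefore $w_{e,t}=1-P_n\in[0,1)$.

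The only delicate point, and the main obstacle such as it is, is the strict upper bound $w_{e,t}<1$. This requires $P_n>0$, which in turn depends on the strict inequality $s_j<1$ for every $j$. A single extraction with confidence exactly $1$ would make $P_n=0$ and the belief would saturate at $1$, so I will state explicitly where the hypothesis $s_j<1$ enters the argument.

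Finally, I will remark on the edge case and the downstream use. For an edge with no supporting evidence, the empty product equals $1$ and so $w_{e,t}=0$, which stays inside the range. The range also guarantees that the route strength in Eq.~\ref{eq:route-strength} lies in $[0,e^{-\rho k})$ and strictly decreases as further hops are appended to a path.
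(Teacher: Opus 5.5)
Your proof is correct and uses the same argument the paper relies on. The paper states the lemma without a separate proof, and the content is exactly yours: each $1-s_j\in(0,1]$, so the product lies in $(0,1]$ and $w_{e,t}\in[0,1)$, with the strict upper bound coming from $s_j<1$. One small caveat on your closing remark: appending a hop \emph{strictly} decreases $\mathrm{Str}(\pi,t)$ only when the current strength is positive, because an edge with $w_{e,t}=0$ (no evidence, or all $s_j=0$) makes the path strength identically zero.
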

\noindent This resolves the soundness question for Eq.~\ref{eq:route-strength}: because $w_{e,t}<1$, the path-strength product already decays geometrically with length, and $\mathrm{Str}(\pi,t)\in(0,1]$, so the route energy $E_\pi=-\log\mathrm{Str}(\pi,t)\ge0$ (Eq.~\ref{eq:energy}) is a valid non-negative energy function and Eq.~\ref{eq:boltzmann} defines a proper Boltzmann distribution. The per-hop discount $\exp(-\rho)$, $\rho=1$, is therefore not required for probabilistic validity; it is a modelling refinement that re-weights paths of different lengths, and it affects at most 1.2\% of decision steps (extended version). The path-strength formulation follows the degree-of-belief-along-a-path score standard in probabilistic knowledge-graph reasoning~\cite{lao2010relational,gardner2014incorporating}. Conflicting polarity evidence between the same entity pair is retained as separate edges rather than cancelled, preserving the graph's ability to represent genuine disagreement.

\subsection{Path Inference and Label Posterior}

The system enumerates direct edges and two-hop paths $(E_1 \to v \to E_2)$ through intermediate nodes. Polarity propagates multiplicatively along each path, following composition rules (beneficial followed by harmful yields harmful, and so on). Route energies $E_\pi = -\log\mathrm{Str}(\pi,t)$ induce a Boltzmann distribution over paths:
\begin{equation}
\label{eq:boltzmann}
q_t(\pi \mid Q) \propto \exp\!\big({-\beta\, E_\pi}\big) = \mathrm{Str}(\pi,t)^\beta.
\end{equation}
Here $\beta>0$ is the Boltzmann inverse temperature that sharpens the distribution towards higher-strength paths; the deployed value is $\beta=1.5$. Each path~$\pi$ maps to a label through its composed polarity, giving a conditional $q_t(L \mid \pi, Q)$. The label posterior marginalises over the path distribution:
\begin{equation}
\label{eq:posterior}
q_t(L \mid Q) = \sum_{\pi \in \mathcal{P}_t} q_t(L \mid \pi, Q)\, q_t(\pi \mid Q),
\end{equation}
where $\mathcal{P}_t$ contains the top-5 paths ranked by route strength. The system distinguishes \textit{NoEvidence} (insufficient data; continue searching) from \textit{NoEffect} (sufficient evidence supports neutral polarity; a genuine finding).

\subsection{Energy Functional}

To reason about when retrieval has yielded enough information, we define an energy functional on the explanation subgraph $S_t \subseteq G_t$:
\begin{equation}
\label{eq:energy}
E(S_t) = \underbrace{-\!\log q_t(L \mid Q)}_{\text{log-posterior}} + \underbrace{\lambda_U \!\sum_{e \in S_t}\! H(\theta_e)}_{\text{edge uncertainty}} + \underbrace{\lambda_V\, \mathrm{Viol}(S_t; C)}_{\text{constraint violation}},
\end{equation}
where $H(\theta_e)$ is the polarity entropy of edge~$e$ and $\mathrm{Viol}(S_t; C) = \sum_{e \in S_t}\max(0,\, \tau_q - w_{e,t})$ penalises edges whose confidence falls below a query-specific threshold~$\tau_q$. The log-posterior term favours decisive conclusions, the uncertainty term penalises ambiguous edge polarities, and the constraint-violation term downweights explanations resting on poorly supported evidence. When the energy stabilises, further retrieval is unlikely to improve the explanation and may instead introduce bias.

\subsection{Stopping Criterion~I: KL Convergence}

The agent monitors the KL divergence between successive label posteriors:
\begin{equation}
\label{eq:kl}
\Delta_t = D_{\mathrm{KL}}\!\big(q_t(L) \,\|\, q_{t-1}(L)\big).
\end{equation}
KL divergence is a natural convergence signal: it is cheap to compute and directly measures whether retrieval is still changing the belief state. Small $\Delta_t$ indicates that the latest retrieval step did not materially alter the posterior. We deliberately monitor the \emph{rate of change} rather than the posterior magnitude: the posterior is poorly calibrated (expected calibration error $0.236$ on the test trajectories, overconfident at high-confidence bins), so its magnitude is an unreliable stopping signal whereas its stabilisation is not.

\begin{proposition}[KL--Energy Bound]
\label{prop:kl-energy}
If $q_t(L) \ge \varepsilon_0 > 0$ for all labels and steps, and the edge-uncertainty and constraint-violation terms change by at most~$\varepsilon_U$ and~$\varepsilon_V$ between consecutive steps, then
\begin{equation}
\label{eq:kl-bound}
\big|E(S_t) - E(S_{t-1})\big| \le \frac{1}{\varepsilon_0}\sqrt{\tfrac{1}{2}\, D_{\mathrm{KL}}(q_t \| q_{t-1})} + \lambda_U \varepsilon_U + \lambda_V \varepsilon_V.
\end{equation}
\end{proposition}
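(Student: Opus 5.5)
Our plan is to split the energy difference along the three summands of Eq.~\ref{eq:energy} and bound each separately with the triangle inequality:
\[
\big|E(S_t)-E(S_{t-1})\big| \le \big|\log q_t(L\mid Q)-\log q_{t-1}(L\mid Q)\big| + \lambda_U\Big|\sum_{e\in S_t}H(\theta_e)-\sum_{e\in S_{t-1}}H(\theta_e)\Big| + \lambda_V\big|\mathrm{Viol}(S_t;C)-\mathrm{Viol}(S_{t-1};C)\big|.
\]
The last two terms are at most $\lambda_U\varepsilon_U$ and $\lambda_V\varepsilon_V$ directly by hypothesis. So all the work lies in the log-posterior term. Here $L$ is the fixed label being scored, i.e.\ the evaluated label, held fixed across the two steps.

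For the log-posterior term we use the mean value theorem on $\log$. Since both $q_t(L)$ and $q_{t-1}(L)$ lie in $[\varepsilon_0,1]$, the derivative $1/x$ is bounded by $1/\varepsilon_0$ on the segment between them. This gives
\[
|\log q_t(L)-\log q_{t-1}(L)|\le \tfrac{1}{\varepsilon_0}\,|q_t(L)-q_{t-1}(L)|.
\]
Next we bound the single-coordinate difference by total variation. Writing $\mathrm{TV}(p,q)=\tfrac12\|p-q\|_1=\max_A|p(A)-q(A)|$, we have $|q_t(L)-q_{t-1}(L)|\le \mathrm{TV}(q_t,q_{t-1})$, taking $A=\{L\}$. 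Finally, Pinsker's inequality gives $\mathrm{TV}(q_t,q_{t-1})\le\sqrt{\tfrac12 D_{\mathrm{KL}}(q_t\|q_{t-1})}$. Chaining the three steps yields the first term of Eq.~\ref{eq:kl-bound}.

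The main obstacle is bookkeeping rather than analysis, on two points.
\begin{itemize}
\item The hypothesis $q_t(L)\ge\varepsilon_0$ must hold at both $t$ and $t-1$, so that the mean-value point stays in $[\varepsilon_0,1]$. The statement grants this for all steps.
\item The KL in Eq.~\ref{eq:kl} is over the label distribution, so Pinsker applies on the finite label space with the natural-log convention matching the $\tfrac12$ constant.
\end{itemize}
If the explanation subgraph changes between $t-1$ and $t$, so that $S_t\neq S_{t-1}$, we interpret $\varepsilon_U,\varepsilon_V$ as bounding the change in the full sums rather than edgewise. That is how we read the hypothesis, and no further argument is needed.
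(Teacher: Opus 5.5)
Your proof is correct and follows the paper's own sketch essentially step for step: the triangle inequality over the three summands, the $1/\varepsilon_0$-Lipschitz bound on $\log$ over $[\varepsilon_0,1]$, the single-coordinate-to-total-variation step, and then Pinsker. Your choice to hold $L$ fixed across the two steps is harmless, because if $L$ is instead the current argmax label the same chain still works via $|\max_L q_t(L)-\max_L q_{t-1}(L)|\le\max_L|q_t(L)-q_{t-1}(L)|\le \mathrm{TV}(q_t,q_{t-1})$.
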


\noindent\emph{Proof sketch.}
Pinsker's inequality bounds total variation by the square root of KL divergence. The log-posterior is $1/\varepsilon_0$-Lipschitz on $[\varepsilon_0, 1]$. Combining with the assumed bounds on the remaining terms yields the result. Full proof in the extended version.\qed

\smallskip
Small KL divergence tells the agent that its beliefs have stabilised, but it cannot distinguish convergence to a correct answer from convergence to an incorrect one. A system that has drifted under publication bias may exhibit low KL divergence precisely because biased evidence has overwhelmed the signal. This limitation calls for a second, discriminative stopping criterion.

\subsection{Stopping Criterion~II: Process Reward Model}

Since a system can stabilise at the wrong answer under biased evidence accumulation, a process reward model (PRM) provides a complementary discriminative signal: it learns which graph states resemble historically correct stopping points. The PRM is a multi-layer perceptron mapping a 20-dimensional feature vector~$\phi_t$ to a scalar reward~$r(\phi_t)$. The feature vector captures path structure~(6), conflict indicators~(2), coverage statistics~(4), saturation measures~(2), global graph statistics~(4), and energy terms~(2); definitions appear in Appendix~\ref{app:impl}.

The PRM is trained on preference pairs derived from recorded trajectories. A snapshot at step~$i$ is preferred over a snapshot at step~$j$ (from the same query) when the step-$i$ conclusion is correct and the step-$j$ conclusion is not. Training uses the Bradley--Terry cross-entropy loss.

\begin{theorem}[PRM Bayes-Optimality]
\label{thm:prm-bayes}
Under a Bradley--Terry noise model with latent utility $U_t = \log\!\big[p(\text{correct} \mid \phi_t) / p(\text{incorrect} \mid \phi_t)\big]$, the population-optimal PRM satisfies $r^*(\phi_t) = U_t + c$ for a constant~$c$. The $\arg\max$ of~$r^*$ over the trajectory therefore coincides with the $\arg\max$ of the correctness probability.
\end{theorem}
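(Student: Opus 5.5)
We model the data-generating process and then identify the minimiser of the population Bradley--Terry risk. Fix a query and a pair of snapshots $(\phi_i,\phi_j)$ drawn from the same trajectory. Write $\pi(\phi)=p(\text{correct}\mid\phi)$. The Bradley--Terry noise model with latent utility $U$ asserts
\[
P(i \succ j \mid \phi_i,\phi_j)=\sigma\big(U(\phi_i)-U(\phi_j)\big),
\]
where $\sigma(x)=1/(1+e^{-x})$. The loss for a scorer $r$ is the expected cross-entropy
\[
\mathcal{L}(r)=-\mathbb{E}\big[y\log\sigma(r(\phi_i)-r(\phi_j))+(1-y)\log\sigma(r(\phi_j)-r(\phi_i))\big],
\]
with $y$ the preference label.

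\textbf{Step 1 (pointwise reduction).} Conditioning on the pair, the inner expectation is the cross-entropy between a Bernoulli with parameter $P(i\succ j\mid\phi_i,\phi_j)$ and one with parameter $\sigma(r(\phi_i)-r(\phi_j))$. By Gibbs' inequality it is minimised exactly when the two parameters agree. Since $\sigma$ is injective, this happens when $r(\phi_i)-r(\phi_j)=U(\phi_i)-U(\phi_j)$ for almost every pair in the support of the pair distribution. Any $r$ achieving this for all pairs attains the infimum of $\mathcal{L}$ pair by pair. It is therefore a population minimiser, and every minimiser must satisfy the difference identity almost surely.

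\textbf{Step 2 (from differences to $r^*=U+c$).} Define $g=r^*-U$. Step~1 gives $g(\phi_i)=g(\phi_j)$ for all comparable pairs. If the comparison graph on feature space is connected (every pair of states is linked through a chain of pairs with positive probability), then $g$ is constant, so $r^*=U+c$. This connectivity is the step I expect to be the main obstacle. The constructed preference pairs only compare a correct snapshot with an incorrect one within the same query. The cleanest route is to state as the theorem's implicit assumption that the pair distribution is the Bradley--Terry population over snapshot pairs with connected support. Identifiability is then standard. I would also remark that on disconnected components $c$ may differ per component; within a single trajectory, whose snapshots are assumed comparable, a single constant suffices. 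I would also check that the labelling rule is consistent with the BT model in expectation. Conditioning on features, the probability that $i$ is labelled preferred, given that exactly one of the two is correct, is
\[
\frac{\pi_i(1-\pi_j)}{\pi_i(1-\pi_j)+\pi_j(1-\pi_i)}=\sigma\big(\mathrm{logit}\,\pi_i-\mathrm{logit}\,\pi_j\big),
\]
assuming independence of correctness given features. This shows the choice $U=\mathrm{logit}\,\pi$ is exactly the one induced by the training rule, which justifies the utility stated in the theorem.

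\textbf{Step 3 (argmax).} Along a trajectory, $\arg\max_t r^*(\phi_t)=\arg\max_t U_t$ because the shift by $c$ is constant. In turn $\arg\max_t U_t=\arg\max_t \pi(\phi_t)$ because $\mathrm{logit}$ is strictly increasing on $(0,1)$. Ties are preserved as sets.
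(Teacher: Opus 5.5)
Your proposal is correct and follows the same route as the paper's sketch: the Bradley--Terry risk is minimised pairwise exactly when reward differences equal latent-utility differences, giving $r^*=U+c$. The trajectory argmax is then unchanged by the constant shift and by the strictly increasing log-odds map.

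Your Step~2 connectivity hypothesis makes explicit an identifiability condition that the paper's sketch assumes silently; without it, $c$ could differ between components. Your conditional-independence check of the labelling rule is only an idealisation for snapshots of the same query, since their correctness is coupled through the shared true label. The main argument does not need that check, however, because the theorem posits the Bradley--Terry model directly.
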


\noindent\emph{Proof sketch.}
The Bradley--Terry loss is minimised when $r(\phi_t)$ equals the log-odds of being preferred plus a constant. Since preferences are generated by correctness, the optimal reward is an affine transform of the log-odds of correctness. The $\arg\max$ is invariant to affine shifts. Full proof in the extended version.\qed

\smallskip
In practice, the PRM is applied as an online decline detector rather than a retrospective argmax selector.  At each step the agent records $r(\phi_t)$; when the reward drops below the running maximum by more than a threshold~$\alpha$, the agent stops, because the graph state is moving away from configurations historically associated with correct conclusions.

\subsection{Combined Stopping Policy}

The two layers address complementary failure modes.  Layer~1 (KL convergence) triggers when $\Delta_t < \delta$, indicating that the posterior has stabilised.  Layer~2 (PRM decline) triggers when $r(\phi_t) < \max_{s \le t} r(\phi_s) - \alpha$, discriminating between correct and biased convergence.  A secondary PRM convergence rule fires when the last four rewards span less than a convergence threshold~$\alpha_c$, detecting plateaus that KL may miss.  The combined policy aims to halt retrieval before publication bias overwhelms the early signal. Figure~\ref{alg:kgsa} gives the complete procedure.

\begin{figure}[t]
\centering
\fbox{\parbox{0.92\columnwidth}{\small
\textbf{DACG-agent stopping loop}\\[2pt]
\textbf{Input:} query $Q$, budget $B$, thresholds $\delta,\alpha,\alpha_c$\\
1.\ Resolve entities; init $G_0{\leftarrow}\emptyset$, $q_0(L){\leftarrow}$Uniform, $r_{\max}{\leftarrow}{-}\infty$\\
2.\ \textbf{for} $t=1$ \textbf{to} $B$:\\
3.\quad Retrieve abstracts; extract triples; update $G_t$ (Eq.~\ref{eq:noisy-or})\\
4.\quad Compute path strengths (Eq.~\ref{eq:route-strength}), posterior $q_t(L{\mid}Q)$ (Eq.~\ref{eq:posterior})\\
5.\quad Compute $\Delta_t$ (Eq.~\ref{eq:kl}); extract $\phi_t$; $r_{\max}{\leftarrow}\max(r_{\max},r(\phi_t))$\\
6.\quad \textbf{if} $\Delta_t<\delta$: \textbf{break} \hfill(Layer 1: KL converged)\\
7.\quad \textbf{if} $r(\phi_t)<r_{\max}-\alpha$: \textbf{break} \hfill(Layer 2: PRM declining)\\
8.\quad \textbf{if} $t{\ge}4$ and 4-step reward range ${<}\alpha_c$: \textbf{break} \hfill(PRM converged)\\
9.\quad \textbf{if} $q_t$ yields \textit{NoEvidence}: trigger multi-hop expansion\\
10.\ \textbf{return} $\hat{L}=\arg\max_L q_t(L{\mid}Q)$ with provenance from $G_t$
}}
\caption{DACG-agent Retrieval Loop with Two-Layer Stopping}
\label{alg:kgsa}
\end{figure}

\section{Experimental Design}
\label{sec:setup}

\subsection{Benchmark}

The evaluation benchmark derives from Cochrane systematic reviews, each providing a consensus label (Beneficial, NoEffect, or Harmful) for an intervention-outcome pair. The benchmark comprises 969 review-derived queries, partitioned by review into 656 train / 140 validation / 173 test with no intervention--outcome pair shared across splits. For evaluation we apply an \emph{unambiguity filter} that retains queries where the gold label (structured review metadata) agrees with the ground-truth label (relabelled conclusion text), removing reviews with mixed or subgroup-dependent conclusions; on the test split this leaves 172 evaluable queries (140 Beneficial/NoEffect and 32 Harmful). \textbf{All accuracy results in this paper are reported on the held-out test split alone}, restricted to the Beneficial/NoEffect classes ($n=140$: 70 Beneficial, 70 NoEffect), eliminating train/validation leakage into the reported metrics. As a generalisation check we additionally report the disjoint validation+test union ($n=280$; extended version). The 32 Harmful queries are analysed separately because abstract-level synthesis reaches only 28\% oracle accuracy on them; the entity-grounding audit (Section~\ref{sec:grounding}) spans the full 172-query set. The retrieved evidence spans PubMed abstracts published 2010--2025.

Cochrane consensus labels are themselves meta-analysis products and can inherit the publication bias we study, making them a strong but imperfect gold standard. This does not undermine the drift analysis: a truly null effect mislabelled Beneficial by a biased review would only \emph{reduce} the measured drift on that query, so our reported drift is a lower bound against an unbiased ground truth.

To guarantee that every method is compared on identical evidence, the agent first runs the full 20-step trajectory without stopping, storing the complete graph state, posterior, features, and PRM reward at every step; each stopping rule is then applied offline to the same recorded sequence. PRM preference pairs are constructed exclusively from training-set trajectories.

\subsection{Biomedical Entity Grounding}
\label{sec:grounding}
Because DACG reasons over a knowledge graph whose nodes are intervention and outcome entities, entity-resolution fidelity is a clinical correctness concern, not a generic engineering detail. We therefore audit how well the extracted entities ground to the Medical Subject Headings (MeSH) controlled vocabulary. This graph-construction audit spans the full test benchmark---all 302 distinct head/tail entities across the complete 172-query test split, including the 32 Harmful queries that the accuracy evaluation analyses separately---since resolution fidelity is a property of the graph, not of the label subset. Querying NCBI for these 302 entities, 22.5\% (95\% CI 17.8--27.2) match a MeSH descriptor exactly and 43.0\% (37.5--48.6) after light lexical normalisation (case folding, qualifier and parenthetical stripping); the 57.0\% residual are predominantly non-atomic clinical phrases carried from review titles (e.g.\ ``antiplatelet agents and anticoagulants''), which no single descriptor covers (Fig.~\ref{fig:grounding}, left). Grounding also exposes synonymy the surface resolver misses: 11 surface forms collapse onto 5 shared MeSH concepts (e.g.\ ``migraine''/``acute migraine headaches''; and ``omega-3 fatty acids'', ``omega 3 fatty acid'', and ``polyunsaturated fatty acids'' onto one descriptor).

Grounding is not merely a data-cleaning step: it interacts with drift in a way that reinforces our central thesis. Splitting the same 172-query test split by whether \emph{both} endpoints ground to MeSH (33 both-grounded vs.\ 139 not), the well-grounded queries drift \emph{more}, not less (36.4\% vs.\ 17.3\%; Fig.~\ref{fig:grounding}, right). The reason is mechanistic rather than paradoxical: entities that map cleanly to a MeSH concept are the well-studied ones, and they accumulate roughly twice the evidence (12.1 vs.\ 6.3 abstracts per query). More evidence is exactly what Theorem~\ref{thm:drift} predicts drives a bias-agnostic aggregator toward Beneficial, so the best-characterised clinical entities, where a decision-support tool is most likely to be trusted, are precisely where drift-aware stopping matters most. Ontology grounding thus sharpens both the interpretability of the graph and the case for the stopping policy; a full MeSH/UMLS entity linker is the natural next step for clinical use (Section~\ref{sec:discussion}).

\begin{figure}[t]
\centering
\includegraphics[width=\columnwidth]{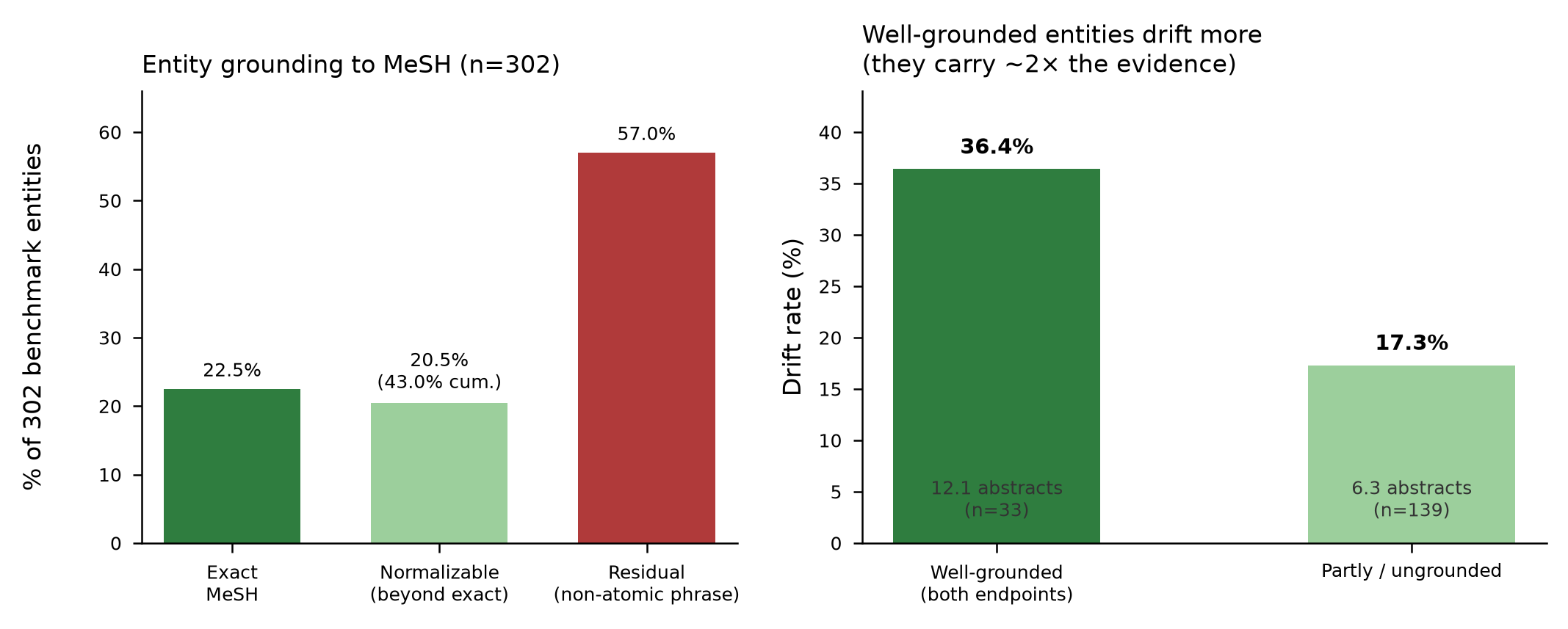}
\caption{Biomedical entity grounding on the full test benchmark ($n=302$ entities over all 172 test queries, Harmful included; distinct from the $n=140$ Beneficial/NoEffect accuracy evaluation). Left: 22.5\% of entities match a MeSH descriptor exactly and a further 20.5\% after light normalisation (43.0\% cumulative); the remaining 57.0\% are non-atomic clinical phrases. Right: queries whose head and tail both ground to MeSH drift \emph{more} (36.4\% vs.\ 17.3\%), because well-grounded (well-studied) entities carry roughly twice the retrieved evidence (12.1 vs.\ 6.3 abstracts), the more-evidence-more-drift mechanism of Theorem~\ref{thm:drift}.}
\label{fig:grounding}
\end{figure}

\subsection{Baselines}

Four \emph{external baselines} operate outside the adaptive-stopping framework. \textbf{E1}~(LLM Zero-Shot): Claude Sonnet answers the causal query with no retrieval. \textbf{E2}~(LLM+RAG): PubMed returns 10 abstracts; the LLM synthesises a judgment. \textbf{E3}~(Single-Pass KG-agent): a controlled instantiation of the paradigm shared by current LLM-driven medical KG agents, which commit to a conclusion from a single evidence-gathering pass or a pre-built graph, with no query-time, evidence-sufficiency stopping criterion~\cite{zhang2025medkgent,su2024kgarevion,jiang2024kg}. These systems differ substantially in their internals---KGARevion verifies LLM-generated triples against a static curated KG for multiple-choice QA, MedKGent constructs a temporally evolving KG offline---and none is openly reproducible end-to-end on the causal-direction task we study. We therefore implement the operative core they share (retrieve, extract causal triples, conclude, without adaptive stopping) on a backbone identical to DACG's, isolating the stopping policy from backbone strength; re-targeting a QA or construction system to our task would itself be a reimplementation, and conflate the two. Section~\ref{sec:drift-generalize} corroborates this on a different LLM (glm-4-flash). \textbf{E4}~(KGARevion): to pair the controlled paradigm with a genuine published system, we additionally run KGARevion~\cite{su2024kgarevion} end-to-end and unmodified on the 140 queries reframed as two-option questions, using its publicly released model and knowledge-graph weights as distributed. It is a Generate--Review--Revise--Answer agent that verifies LLM-extracted triples against a static curated biomedical KG (PrimeKG) and is closed-book with respect to the primary literature.

Four \emph{fixed-budget} baselines halt at $k \in \{3, 5, 10, 20\}$.

Three \emph{graph-structural} baselines monitor the evolving knowledge graph rather than the label posterior:
\textbf{Density stability} halts when the graph density ceases to change between consecutive steps;
\textbf{Path discovery} halts when the path-discovery rate drops (no new causal paths found);
\textbf{Evidence plateau} halts when the total evidence growth rate falls below a threshold over a sliding window.
These baselines test whether structural convergence of the graph suffices for accurate stopping.

Two \emph{ablation} variants isolate stopping components (Table~\ref{tab:ablation}): DACG$_{\text{KL}}$ (KL convergence only, no PRM) and DACG$_{\text{PRM}}$ (PRM decline only, no KL). An \emph{oracle} selects the per-query best step.

\subsection{Evaluation Metrics}

Because the test set is class-balanced (70/70) we report \emph{macro-F1} alongside accuracy, with 95\% bootstrap confidence intervals (10{,}000 resamples). Four further metrics capture retrieval behaviour. \textbf{Accuracy}: fraction matching Cochrane consensus. \textbf{Steps}: mean retrieval iterations. \textbf{Oracle regret}: $\max(0,\, \text{stop\_step} - \text{first\_correct\_step})$. \textbf{Drift rate}: fraction of queries that reach a correct answer at some step but are incorrect at the stopped step. Because the central claim concerns null-effect identification under bias, \emph{NoEffect accuracy} and \emph{drift rate} are the primary evaluation criteria. Overall accuracy is reported but is expected to favour methods exploiting the Beneficial-skewed prior.

\subsection{Reproducibility}
Implementation details appear in Appendix~\ref{app:impl}.

\section{Empirical Analysis}
\label{sec:results}

\subsection{Evidence Drift in Practice}
\label{sec:drift-empirical}

The asymmetry predicted by Theorem~\ref{thm:drift} is visible in the per-class mean posterior $q_t(\textit{Beneficial})$: for NoEffect queries it climbs steadily from 0.21 to 0.43 by step~14 (a $+0.22$ drift toward the boundary), whereas Beneficial queries start at the boundary (0.48) and rise only to 0.60. Drift thus concentrates in the NoEffect class: of NoEffect queries that reach a correct intermediate answer, 46.9\% drift to an incorrect one by step~20, versus only 10.0\% of Beneficial queries. Since the median time-to-correct is one step, the agent typically finds the right answer after a single retrieval and then loses it as biased evidence accumulates.

\subsection{Main Results}

Tables~\ref{tab:main} and~\ref{tab:ablation} present results with 95\% bootstrap confidence intervals (10{,}000 resamples).

\begin{table}[t]
\centering
\caption{Comparison on the held-out test set ($n=140$; 70 Beneficial, 70 NoEffect). Acc = overall accuracy with 95\% bootstrap CI; mF1 = macro-F1; Steps = mean retrieval steps; Drift = fraction reaching a correct answer then drifting away. Ben/NoE = per-class accuracy.}
\label{tab:main}
\footnotesize
\setlength{\tabcolsep}{0.6pt}
\begin{tabular}{@{}l cccccc@{}}
\toprule
Method & Acc\,(\%) & mF1 & Drift\,(\%) & Steps & Ben\,(\%) & NoE\,(\%) \\
\midrule
\multicolumn{7}{@{}l}{\emph{External baselines (re-scored on the 140-query test set)}} \\[1pt]
E1 Zero-Shot LLM   & 67.1\,{\scriptsize(59--75)} & 65.1 & --   & 0    & 91.4 & 42.9 \\
E2 RAG             & 63.6\,{\scriptsize(56--71)} & 59.6 & --   & 1    & 95.7 & 31.4 \\
E3 Single-Pass KG-agent & 57.1\,{\scriptsize(49--65)} & 48.3 & 0.0  & 1    & 15.7 & 98.6 \\
E4 KGARevion (published) & 56.4\,{\scriptsize(48--64)} & 47.8 & --   & --   & 97.1 & 15.7 \\
Meta-analysis (trim-fill) & 57.9 & 57.8 & -- & -- & 61.4 & 54.3 \\
\midrule
\multicolumn{7}{@{}l}{\emph{Fixed budget}} \\[1pt]
$k{=}3$            & 64.3\,{\scriptsize(56--72)} & 63.2 & 7.9  & 2.94  & 81.4 & 47.1 \\
$k{=}5$            & 64.3\,{\scriptsize(56--72)} & 63.5 & 11.4 & 4.72  & 78.6 & 50.0 \\
$k{=}10$           & 62.9\,{\scriptsize(55--71)} & 61.3 & 14.3 & 8.10  & 82.9 & 42.9 \\
$k{=}20$           & 61.4\,{\scriptsize(53--69)} & 59.6 & 15.7 & 11.41 & 82.9 & 40.0 \\
\midrule
\multicolumn{7}{@{}l}{\emph{Belief-convergence stopping}} \\[1pt]
KL-only            & 69.3\,{\scriptsize(61--77)} & 69.0 & 7.1  & 4.29  & 78.6 & 60.0 \\
\rowcolor[gray]{0.93}
DACG (ours)        & \textbf{69.3}\,{\scriptsize(61--77)} & \textbf{69.1} & \textbf{6.4} & 3.71 & 77.1 & \textbf{61.4} \\
\midrule
Oracle             & 77.1\,{\scriptsize(70--84)} & 76.8 & 0.0  & 3.66  & 88.6 & 65.7 \\
\bottomrule
\end{tabular}
\end{table}

\begin{table}[t]
\centering
\caption{Ablation of stopping components, test-only $n=140$. $_{\text{KL}}$: KL only; $_{\text{PRM}}$: PRM decline only.}
\label{tab:ablation}
\small
\setlength{\tabcolsep}{3.5pt}
\begin{tabular}{@{}l ccccc@{}}
\toprule
Variant & Acc\,(\%) & mF1 & Drift\,(\%) & Steps & NoE\,(\%) \\
\midrule
$k{=}20$ (no stop)  & 61.4\,{\scriptsize(53--69)} & 59.6 & 15.7 & 11.41 & 40.0 \\
DACG$_{\text{KL}}$       & 69.3\,{\scriptsize(61--77)} & 69.0 & 7.1  & 4.29  & 60.0 \\
DACG$_{\text{PRM}}$     & 65.0\,{\scriptsize(57--73)} & 64.3 & 10.7 & 6.59  & 51.4 \\
\rowcolor[gray]{0.93}
\textbf{DACG (KL+PRM)}    & \textbf{69.3}\,{\scriptsize(61--77)} & \textbf{69.1} & \textbf{6.4} & \textbf{3.71} & \textbf{61.4} \\
\midrule
Oracle                   & 77.1\,{\scriptsize(70--84)} & 76.8 & 0.0 & 3.66 & 65.7 \\
\bottomrule
\end{tabular}
\end{table}

E1 (zero-shot LLM) reaches 67.1\% overall accuracy on the test set, driven by 91.4\% on Beneficial but only 42.9\% on NoEffect. E2, which adds retrieval, drops NoEffect further to 31.4\%, illustrating that retrieval injects publication-biased evidence that reinforces rather than corrects the LLM's positive-result skew. The single-pass KG-agent (E3), the external KG paradigm run on the identical stack, presents the inverse failure: with only one retrieval step it reaches 98.6\% NoEffect but collapses to 15.7\% Beneficial, because a graph built from a single batch rarely accumulates enough concordant evidence to support a positive conclusion. The published KGARevion system (E4), run end-to-end and unmodified, attains 97.1\% Beneficial but only 15.7\% NoEffect (56.4\% overall, macro-F1 47.8), misclassifying 59 of 70 genuine nulls as Beneficial. This follows from its design: KGARevion grounds each query against a static curated KG whose relation schema (indication, contraindication, association) has no representation for the \emph{absence} of an effect, so a confirmed graph link reads as benefit. A real, peer-reviewed medical KG-agent thus systematically converts true nulls into recommendations---precisely the failure our live-evidence, drift-aware stopping is designed to prevent. E1--E4 bracket the space: zero-shot, RAG, and the static-KG agent over-predict Beneficial, the single-pass KG over-predicts NoEffect, and none balances the two classes; only adaptive stopping reaches 61.4\% NoEffect without sacrificing Beneficial (77.1\%). E1's overall accuracy (67.1\%) trails DACG's (69.3\%) only narrowly because overall accuracy rewards the majority-Beneficial skew; the decision-relevant separation is on NoEffect (+18.5\,pp) and drift, where a support tool must not convert a genuine null into a recommendation. A meta-analytic trim-and-fill correction on the retrieved polarities (57.9\% accuracy, 54.3\% NoEffect) improves on naive vote-counting but is applicable to only 75\% of queries and trails DACG substantially.

Among fixed-budget methods, increasing~$k$ from~3 to~20 does not improve overall accuracy (64.3\%$\to$61.4\%) yet drift doubles (7.9\%$\to$15.7\%) and NoEffect accuracy drops from 47.1\% to 40.0\%, consistent with Theorem~\ref{thm:drift}: a larger budget primarily increases exposure to biased evidence, and the damage falls on the null-effect class. DACG instead achieves 69.3\% overall accuracy (95\% CI 61--77, macro-F1 69.1) and 61.4\% NoEffect at 3.71 steps, with lower drift (6.4\%) than every fixed budget, improving on $k\!=\!20$ by 7.9\,pp overall and 21.4\,pp on NoEffect while using 67\% fewer steps (Fig.~\ref{fig:main}, right: belief-convergence stoppers reach the early NoEffect peak that fixed budgets overshoot).

The ablation (Table~\ref{tab:ablation}) isolates each component. KL convergence alone raises NoEffect from 40.0\% ($k\!=\!20$) to 60.0\% and halves drift to 7.1\%, the dominant signal; adding PRM decline leaves accuracy unchanged (differing on a single query) but cuts mean retrieval 4.29$\to$3.71 steps (14\%) and drift to 6.4\%. DACG$_{\text{PRM}}$ alone (51.4\% NoEffect) underperforms KL alone, confirming KL is the accuracy layer and the PRM contributes efficiency.

\begin{figure}[t]
\centering
\includegraphics[width=\columnwidth]{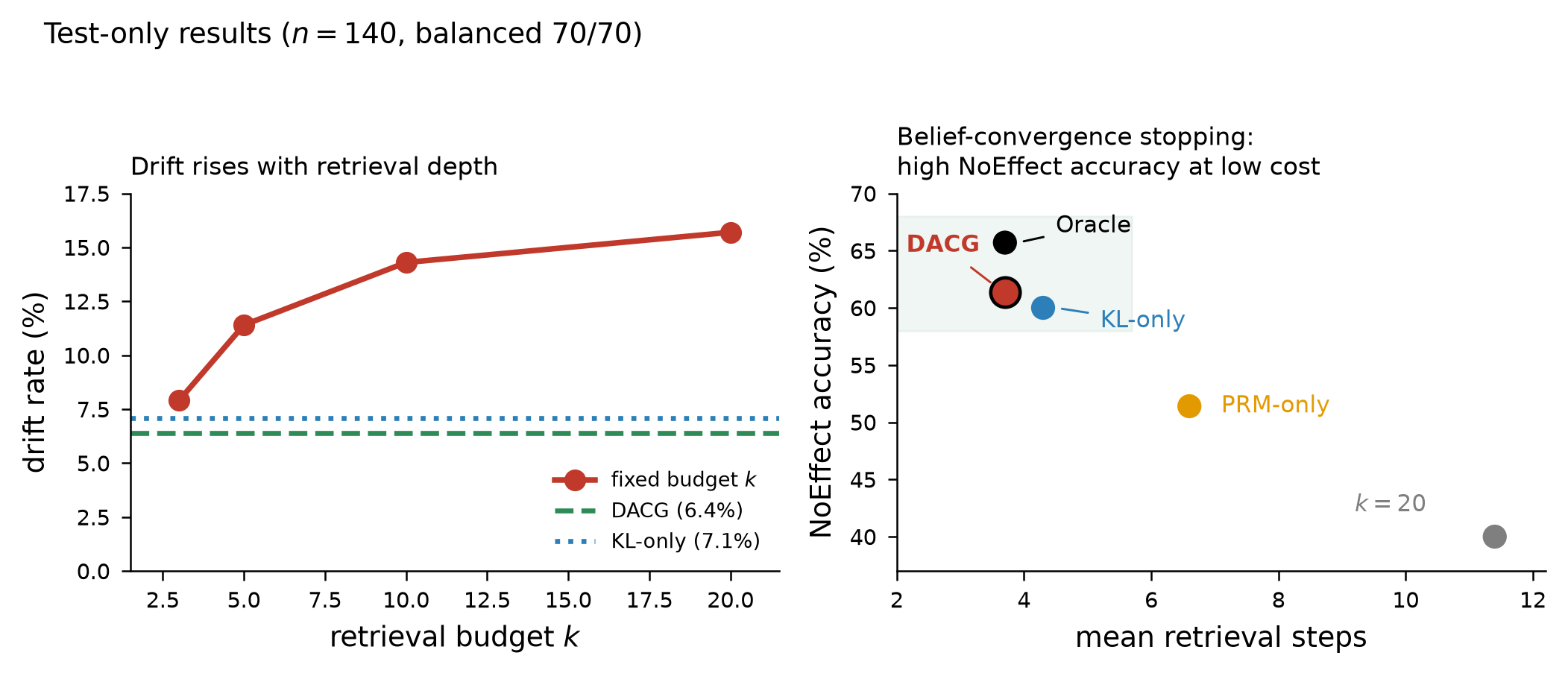}
\caption{Main results (test-only, $n=140$). Left: drift rises monotonically with retrieval budget while DACG stops early and keeps drift lowest. Right: NoEffect accuracy vs.\ mean retrieval steps; belief-convergence stopping (DACG, KL) reaches the early null-effect peak that fixed budgets overshoot.}
\label{fig:main}
\end{figure}

\subsection{Structural vs.\ Belief-Based Stopping}

Stopping on \emph{graph structure} rather than posterior stability is an intuitive alternative, but all three structural rules (density stability, path discovery, evidence plateau) confuse structural with inferential convergence and none exceeds 50.0\% NoEffect or 67.1\% overall: a structural signal cannot distinguish ``insufficient evidence, continue'' from ``true null effect, stop'', exactly the distinction belief monitoring provides (DACG: 61.4\% NoEffect, 69.3\% overall). The PRM itself separates correct from incorrect steps (mean reward $-0.082$ vs.\ $-0.368$, gap 0.287 on test; training-set margin 2.53; 88.1\% held-out pairwise accuracy), consistent with Theorem~\ref{thm:prm-bayes}.

\subsection{Statistical Tests}

Figure~\ref{fig:main} (left) plots drift against fixed budget~$k$: drift rises monotonically from 7.9\% at $k\!=\!3$ to 15.7\% at $k\!=\!20$, the empirical counterpart of Theorem~\ref{thm:drift}, and DACG (6.4\%) undercuts every fixed budget, so retrieval depth is better read as a risk variable than a monotone performance parameter. McNemar's exact test confirms DACG and KL-only each significantly outperform $k\!=\!20$ ($p = 0.019$ and $0.013$). DACG and KL-only reach the same accuracy (differing on a single query, $p = 1.0$); the PRM's role is to reach it in 14\% fewer retrieval steps.

\subsection{Cross-LLM Drift Generalisation}
\label{sec:drift-generalize}

To verify that evidence drift is not an artefact of a particular extraction model, we rebuilt two pipelines on an independent LLM (glm-4-flash): a single-pass KG-agent and a retrieval-with-reranking pipeline using a BGE cross-encoder~\cite{xiao2024cpack}. The single-pass agent reaches only 17.1\% NoEffect accuracy, and reranking only 40.0\% (the level of fixed $k{=}20$); neither approaches the belief-convergence stoppers (60--61\%). Drift thus arises from the literature's positive-result skew, not retrieval noise or extraction-model quality---better ranking alone cannot compensate for it.

\section{Related Work}
\label{sec:related}

Several biomedical KG agents have been proposed: KG-Agent~\cite{jiang2024kg} navigates pre-existing graphs via LLM tool calls; KGARevion~\cite{su2024kgarevion} integrates KG retrieval with multi-step reasoning; KG4Diagnosis~\cite{zuo2024kg4diagnosis} and KERAP~\cite{xie2025kerap} add hierarchical structure for diagnosis; MedKGent~\cite{zhang2025medkgent} constructs temporally evolving KGs from PubMed; BioKGBench~\cite{lin2024biokgbench} benchmarks such systems. None addresses distributional shift during iterative retrieval or stopping under publication bias. Publication bias itself is well documented~\cite{begg1988publication,hasenboehler2007bias,song2010dissemination}; traditional meta-analysis corrects it on a \emph{fixed} corpus, but these corrections do not transfer to incremental accumulation, the sequential case Theorem~\ref{thm:drift} characterises. Active retrieval methods~\cite{jin2023genegpt} decide \emph{whether} to retrieve but not \emph{when to stop}; the stopping problem here differs from bandit or active-learning settings because the evidence source is systematically biased, so additional samples can decrease accuracy. The energy formulation (Section~\ref{sec:approach}) draws motivation from active inference and path-based causal KG reasoning, but the formal results hold independently of such analogies.

\section{Discussion and Limitations}
\label{sec:discussion}

That the non-retrieval LLM attains competitive overall accuracy reflects publication bias absorbed during pre-training, not evidence-based reasoning: on a class-balanced test set (70/70), overall accuracy rewards methods that over-predict Beneficial. In evidence synthesis the pertinent criterion is whether conclusions rest on traceable, updateable evidence, which no zero-shot method provides; among methods that produce evidence chains, DACG achieves the best accuracy--efficiency balance.

A gap exists between the formal model and the deployed system: Theorem~\ref{thm:drift} analyses vote-counting, whereas DACG uses noisy-OR confidence and Boltzmann path scoring. The theorem isolates the minimal drift mechanism; noisy-OR adds nonlinearities (a single high-confidence positive paper pushes edge confidence near~1, amplifying drift under agreement but dampening it under conflicting-polarity entropy). The simulation of Section~\ref{sec:noisyor} and the empirical results (Section~\ref{sec:drift-empirical}) show the qualitative prediction survives these nonlinearities; a closed-form treatment under noisy-OR remains open.

Theorem~\ref{thm:prm-bayes} establishes that the population-optimal PRM identifies the trajectory argmax of correctness. In practice we use online decline detection ($r < r_{\max} - \alpha$) rather than retrospective argmax, since the full trajectory is unavailable at decision time: the theorem justifies the PRM as a ranking signal (rewards correlate with correctness: test gap 0.287, training margin 2.53) but the online rule is a conservative approximation. This partly explains why DACG$_{\text{PRM}}$ alone (51.4\% NoEffect) underperforms KL alone (60.0\%), which monitors the posterior directly.

The PRM decline rule can stop too early on Beneficial queries where evidence accumulation is genuinely helpful, contributing to DACG's slightly lower Beneficial accuracy (77.1\%) compared with KL-only (78.6\%).  Class-aware PRM thresholds or asymmetric stopping rules are a natural direction for future work.

\paragraph{Forgetting and recency.} Our claim is not that recent evidence is more biased, but that \emph{additional} evidence of any vintage from a positive-skewed corpus drives a bias-agnostic aggregator towards \textit{Beneficial} (Theorem~\ref{thm:drift}); stopping addresses the \emph{volume} of accumulation, not document age. Within a query, labels are not frozen---$q_t(L\mid Q)$ is recomputed each step and the noisy-OR update lets a later contradicting edge lower a dominant polarity---but we do not model \emph{across-time} correction (down-weighting findings a newer trial overturns). Since timestamps already enter $\phi_t$, a temporally-weighted aggregator is a concrete extension.

The entity resolver uses surface-form matching without biomedical ontologies (MeSH, UMLS~\cite{unni2022biolink}), limiting recall for synonym-rich entities; Section~\ref{sec:grounding} quantifies this (43.0\% ground to MeSH). Multiplicative polarity composition assumes hop independence~\cite{jaimini2024causallp}; mechanism-aware composition could improve path-level accuracy. The 77.1\% oracle ceiling reflects a structural limit of abstract-level synthesis, and the Harmful class is excluded (oracle accuracy 28\%) because harm mechanisms are rarely described in abstracts.

Our evaluation draws on a single benchmark family (Cochrane-derived queries). While we mitigate the sample-size concern with a larger disjoint held-out union ($n=280$; extended version) and report macro-F1 with bootstrap intervals throughout, generalisation to other evidence sources (drug-label databases, trial registries such as ClinicalTrials.gov, or non-Cochrane systematic reviews) remains to be shown. Drift is a property of the underlying publication distribution rather than of Cochrane specifically, so we expect it to appear wherever a positive-result skew exists; confirming this on a second, independently curated benchmark is the most valuable external validation and a priority for future work.

\section{Conclusion}
\label{sec:conclusion}

Publication bias creates a structural trap for retrieval-based biomedical evidence synthesis: the more literature a system retrieves, the more likely it is to misclassify a true null effect as beneficial. We formalised this as evidence drift and proved that the misclassification probability converges to one as retrieval depth grows (Theorem~\ref{thm:drift}). DACG-agent's two-layer stopping policy addresses this through two complementary layers: KL divergence monitoring bounds energy change and detects convergence, governing accuracy (Proposition~\ref{prop:kl-energy}), whilst online PRM decline detection governs efficiency by halting once evidence quality peaks. On the held-out test set, KL convergence alone reduces drift from 15.7\% to 7.1\% and raises NoEffect accuracy by 20\,pp; adding the PRM layer holds that accuracy while cutting drift to 6.4\% and using 67\% fewer retrieval steps than the full budget. In biomedical evidence synthesis under publication bias, determining when to cease retrieval is as consequential as determining what to retrieve.


{\footnotesize
\bibliographystyle{IEEEtran}
\bibliography{ref}}

\appendices
\section{Implementation Details}
\label{app:impl}
The PRM is a four-layer MLP $[20 \to 128 \to 64 \to 32 \to 1]$ (ReLU) over the 20-dimensional feature vector $\phi_t$ (six groups: path structure~6, conflict~2, coverage~4, saturation~2, global statistics~4, energy~2), trained with a Bradley--Terry pairwise loss (margin $m=0.1$) on 2{,}998 training-split preference pairs, reaching 88.1\% pairwise-ranking accuracy. Stopping thresholds, frozen after validation tuning, are KL $\delta=0.01$, PRM decline $\alpha=0.3$, and convergence $\alpha_c=0.1$; full feature definitions, the Bradley--Terry derivation, and the benchmark protocol are in the extended version.

\end{document}